\icmltitlerunning{Finding Influential Training Samples for Gradient Boosted Decision Trees}
\newcommand{\bw}{\mathbf{w}}
\newcommand{\bx}{\mathbf{x}}
\newcommand{\bX}{\mathbf{X}}
\newcommand{\objpath}[1]{P(#1)}
\newcommand{\appr}[2]{A_{#2}^{#1}}
\newcommand{\apprw}[2]{A_{#2}^{#1}(\bw)}
\newcommand{\apprh}[2]{\hat{A}_{#2}^{#1}}
\newcommand{\ba}[1]{\mathbf{A^{#1}}}
\newcommand{\da}[3]{J(\ba{#1})_{#2#3}}
\newcommand{\baw}[1]{\mathbf{A^{#1}}(\bw)}
\newcommand{\leafval}[2]{f_{#2}^{#1}}
\newcommand{\leafvalh}[2]{\hat{f}_{#2}^{#1}}
\newcommand{\leafvala}[3]{f_{#2}^{#1}(#3)}
\newcommand{\leafvalawg}[2]{f_{G;#2}^{#1}(\baw{#1-1}, \bw)}
\newcommand{\leafvalawh}[2]{f_{H;#2}^{#1}(\baw{#1-1}, \bw)}
\newcommand{\leafidx}[2]{I_{#2}^{#1}}
\newcommand{\hleafidx}[2]{\hat{I}_{#2}^{#1}}
\newcommand{\updateset}[1]{U^{#1}}
\newcommand{\delt}[2]{\Delta_{#2}^{#1}}
\newcommand{\dloss}[2]{\frac{\partial L(#2, #1)}{\partial #1}}
\newcommand{\hloss}[2]{\frac{\partial^2 L(#2, #1)}{\partial #1^2}}
\newcommand{\dlossat}[3]{\dloss{#1}{#2}\vert_{#1=#3}}
\newcommand{\hlossat}[3]{\hloss{#1}{#2}\vert_{#1=#3}}
\newcommand{\der}[2]{g^{#1}_{#2}(\appr{#1-1}{#2})}
\newcommand{\derw}[2]{g^{#1}_{#2}(\apprw{#1-1}{#2})}
\newcommand{\dernoa}[2]{g^{#1}_{#2}}
\newcommand{\hess}[2]{h^{#1}_{#2}(\appr{#1-1}{#2})}
\newcommand{\hessw}[2]{h^{#1}_{#2}(\apprw{#1-1}{#2})}
\newcommand{\hessnoa}[2]{h^{#1}_{#2}}
\newcommand{\third}[2]{k^{#1}_{#2}(\appr{#1-1}{#2})}
\newcommand{\thirdw}[2]{k^{#1}_{#2}(\apprw{#1-1}{#2})}
\newcommand{\thirdnoa}[2]{k^{#1}_{#2}}
\newcommand{\Der}[2]{G^{#1}_{#2}(\ba{#1-1})}
\newcommand{\Derw}[2]{G^{#1}_{#2}(\baw{#1-1}, \bw)}
\newcommand{\Hess}[2]{H^{#1}_{#2}(\ba{#1-1})}
\newcommand{\Hessw}[2]{H^{#1}_{#2}(\baw{#1-1}, \bw)}
\newcommand{\f}[1]{F(#1)}
\newcommand{\fh}[2]{\hat{F}_{\backslash #2}(#1)}
\newcommand{\leafrefit}{LeafRefit}
\newcommand{\leafrefiti}{\textit{\leafrefit}}
\newcommand{\fleafrefit}{FastLeafRefit}
\newcommand{\fleafrefiti}{\textit{\fleafrefit}}
\newcommand{\leafgrad}{LeafInfluence}
\newcommand{\leafgradi}{\textit{\leafgrad}}
\newcommand{\fleafgrad}{FastLeafInfluence}
\newcommand{\fleafgradi}{\textit{\fleafgrad}}
\newcommand{\Influencegrad}{\mathit{Inf_{grad}}}
\newtheorem{assumption}{Assumption}
\newtheorem{proposition}{Proposition}
\newtheorem{proxy}{Proxy}
\begin{document}
	
	\twocolumn[
	\icmltitle{Finding Influential Training Samples for Gradient Boosted Decision Trees}
	
	
	
	\icmlsetsymbol{equal}{*}
	
	\begin{icmlauthorlist}
		\icmlauthor{Boris Sharchilev}{uva,ya}
		\icmlauthor{Yury Ustinovsky}{princeton}
		\icmlauthor{Pavel Serdyukov}{ya}
		\icmlauthor{Maarten de Rijke}{uva}
	\end{icmlauthorlist}
	
	\icmlaffiliation{uva}{Informatics Institute, University of Amsterdam, Amsterdam, The Netherlands}
	\icmlaffiliation{ya}{Yandex, Moscow, Russia}
	\icmlaffiliation{princeton}{Department of Mathematics, Princeton University, Princeton, NJ, USA}

	\icmlcorrespondingauthor{Boris Sharchilev}{bshar@yandex-team.ru}
	\icmlcorrespondingauthor{Pavel Serdyukov}{pavser@yandex-team.ru}
	\icmlcorrespondingauthor{Maarten de Rijke}{derijke@uva.nl}
	
	\icmlkeywords{Tree Ensembles, Gradient Boosting, Influence, Interpretations}
	
	\vskip 0.3in
	]
	
	
	
	\printAffiliationsAndNotice{}  


\begin{abstract}
	We address the problem of finding influential training samples for a particular case of tree ensemble-based models, e.g., Random Forest (RF) or Gradient Boosted Decision Trees (GBDT). 
	A natural way of formalizing this problem is studying how the model's predictions change upon leave-one-out retraining, leaving out each individual training sample. 
	Recent work has shown that, for parametric models, this analysis can be conducted in a computationally efficient way. 
	We propose several ways of extending this framework to non-parametric GBDT ensembles under the assumption that tree structures remain fixed. 
	Furthermore, we introduce a general scheme of obtaining further approximations to our method that balance the trade-off between performance and computational complexity. 
	We evaluate our approaches on various experimental setups and use-case scenarios and demonstrate both the quality of our approach to finding influential training samples in comparison to the baselines and its computational efficiency.\footnote{Supporting code for the paper is available at \url{https://github.com/bsharchilev/influence_boosting}.}
\end{abstract}

\section{Introduction and Background}
\label{section-introduction}
	As machine learning-based models become more widespread and grow in both scale and complexity, methods of interpreting their predictions are increasingly attracting attention from the machine learning community. 
	Some of the applications and benefits of employing these methods outlined in previous work~\citep{ancona-unified-2017} include \begin{inparaenum}[(1)] \item ``debugging'' the model to expose ways of model failures not discoverable via conventional test set performance measuring (e.g., data or target leakages); \item boosting developer's trust in the model's performance in scenarios when on-line evaluation is not available before deployment; and \item increasing user satisfaction and/or confidence in provided predictions, etc\end{inparaenum}. Various problem setups~\citep{palczewska-interpreting-2013,tolomei-interpretable-2017,fong-interpretable-2017} and interpretation methods, both model-agnostic~\citep{ribeiro-should-2016,lundberg-consistent-2017} and model-specific~\cite{shrikumar-learning-2017,tolomei-interpretable-2017,sundararajan-axiomatic-2017}, have recently been proposed in the literature.
	
	A common trait shared by the majority of these methods is that they treat the provided model as a \emph{fixed} function of input objects and study which features had the largest effect on the prediction, how the model responds to  feature perturbations, etc. 
	However useful they are, the obtained interpretations do not provide a way of automatically \emph{improving} the model, since the model is fixed; the main use-case thus becomes manual analytics by the user or the developer, which is both time and resource-consuming. 
	It is thus desirable to derive a framework for obtaining \emph{actionable} insights into the model's behavior allowing us to automatically improve a model's performance.
	
	One such framework has recently been introduced by \citet{koh-2017-understanding}; it deals with finding the most influential training objects. 
	They formalize the notion of ``influence" via an infinitesimal approximation to leave-one-out retraining: the core question that this work aims to answer is ``how would the model's performance on a test object $\bx_{test}$ change if the weight of a training object $\bx_{train}$ is perturbed?" 
	Assuming a smooth parametric model family (e.g., linear models or neural networks), the authors employ the Influence Functions framework from classical statistics~\citep{cook-characterizations-1980} to show that this quantity can be estimated much faster than via straightforward model retraining, which makes their method tractable in a real-world scenario. 
	A natural use-case of such a framework is to consider individual test objects (or groups of them) on which the model performs poorly and either remove the most ``harmful" training objects or prioritize a batch of new objects for labeling based on which ones are expected to be the most ``helpful," akin to active learning.
	
	Unfortunately, the method suggested by \citet{koh-2017-understanding} heavily relies on the smooth parametric nature of the model family. 
	While this is a large class of machine learning models, it is by far not the only one.
	In particular, decision tree ensembles such as Random Forests~\citep[][RF]{ho-random-1995} and Gradient Boosted Decision Trees~\citep[][GBDT]{friedman-greedy-2001} are probably the most widely used model family in industry, largely due to their state-of-the-art performance on structured and/or multimodal data. 
	Thus, it is important to extend the aforementioned Influence Functions framework to tree ensembles.
	
	In this paper, we propose a way of doing so, while focusing specifically on GBDT.
	We consider two \emph{proxy} metrics for the informal notion of influence. For the first one, leave-one-out retraining, we utilize the inner mechanics of fitting decision trees (in particular, assuming that a small training sample perturbation does not change the trees' structures) to derive \leafrefiti~and~\fleafrefiti, a well-founded family of approximations to leave-one-out retraining that trade off approximation accuracy for computational complexity.
	For the second, analogously to the Influence Functions framework, we consider infinitesimal training sample weight perturbations and derive~\leafgradi~and~\fleafgradi, methods for estimating gradients of the model's predictions with respect to training objects' weights. 
	From a theoretical perspective, \leafgradi{} and~\fleafgradi{} allow us to deal with the discontinuous dependency of tree structure on training sample perturbations; from a practical one, they allow us to further reduce computational complexity due to the possibility of precomputing certain derivatives.
	
	In our experiments we \begin{inparaenum}[(1)] \item study the conditions under which our methods, \fleafrefiti~and~\fleafgradi, successfully approximate their proxy metrics, \item demonstrate our methods' ability to target training objects which are influential for specific test objects, and \item show that our algorithms run much faster than straightforward retraining, which makes them applicable in practical scenarios.
	\end{inparaenum}
	

\begin{table}[t]
	\caption{Mathematical notations used in the paper.}
	\label{notations}
	\begin{small}
		\begin{tabulary}{\columnwidth}{LL}
			\toprule
			Notation & Description \\
			\midrule
			$\bx=(x, y)$ & Data point \\
			$\bX=\{(x_i, y_i)\}_{i=1}^n$ & Training/test sample \\
			$L(y_{true},y_{pred})$ & Loss function \\
			$\bw=(w_1,...,w_n)$ & Weights of training samples \\
			$P(x)=(i_1,...,i_T)$ & Path (leaf indices) of $x$ \\
			$F(x)$ & GBDT prediction at point $x$ \\
			$f_l^t $ & Value in leaf $l$ at step $t$ \\
			$I_l^t$ & Training points belonging to leaf $l$ at step $t$ \\
			$\ba{t}=\{A_i^t:=\sum_{\tau=1}^t\leafval{\tau}{P(x_i)_\tau}\}_{i=1}^n$ & Intermediate predictions on $\{x_i\}_{i=1}^n$ \\
			$\der{t}{i}:=\dlossat{z}{y_i}{\appr{t-1}{i}}$ & $i$-th first derivative at training step $t$ \\
			$\hess{t}{i}:=\hlossat{z}{y_i}{\appr{t-1}{i}}$ & $i$-th second derivative at training step $t$ \\
			$k^t_i(A^{t-1}_i):=\frac{\partial^3 L(y_i, z)}{\partial z^3}\vert_{z=\appr{t-1}{i}}$ & $i$-th third derivative at training step $t$ \\
			$\Der{t}{l}:=\sum_{j\in\leafidx{t}{l}}w_j\der{t}{j}$ & Sum of leaf derivatives \\
			$\Hess{t}{H;l}:=\sum_{j\in\leafidx{t}{l}}w_j\hess{t}{j}$ & Sum of leaf second derivatives \\
			$\Hess{t}{G;l}:=\sum_{j\in\leafidx{t}{l}}w_j$ & Sum of leaf weights \\
			$\Influencegrad(\bx_1, \bx_2)$ & Influence of object $\bx_1$ on $\bx_2$ \\
			\bottomrule
		\end{tabulary}
	\end{small}
\vskip -0.8cm
\end{table}

\section{Problem Definition}
\label{section-problem-definition}
\label{section-approach-setup}

	First, we formally define the problem setup. We consider standard supervised training of a GBDT ensemble\footnote{Mathematical notations are defined in Table~\ref{notations}.} $F(x;\bw):=\sum_{t=1}^T\leafvala{t}{\objpath{x}_t}{\ba{t-1}}$ on a training sample $\bX_{train}$. 
	Learning consists of two separate stages: \emph{model structure selection} and \emph{picking the optimal leaf values}. 
	The way of choosing the model structure is not important for our work; we refer the interested reader to existing implementations, e.g., \citet{chen-2016-xgboost,dorogush-2017-fighting}. For picking optimal leaf values, we consider two most commonly used formulas:

	\noindent
	\textbf{Gradient}: At leaf $l$ at step $t$, output negative average gradients (calculated at current predictions) over the leaf objects:
		\begin{equation}
		\begin{gathered}
			\leafvala{t}{G;l}{\ba{t-1}}:=-\frac{\Der{t}{l}}{\Hess{t}{G;l}}.
		\end{gathered}
		\label{leaf-formula-gradient}
		\end{equation}
	This is equivalent to minimizing the empirical loss function w.r.t.\ the current leaf value by doing a single gradient step in function space~\citep{chen-2016-xgboost}.
	
	\noindent
	 \textbf{Newton}: At leaf $l$ at step $t$, output the negative total gradient divided by the total second derivative over the leaf objects:
		\begin{equation}
		\begin{gathered}
			\leafvala{t}{H;l}{\ba{t-1}}:=-\frac{\Der{t}{l}}{\Hess{t}{H;l}}.
		\end{gathered}
		\label{leaf-formula-newton}
		\end{equation}
	This is equivalent to minimizing the empirical loss function w.r.t.\ the current leaf value by doing a single Newton step in function space~\citep{chen-2016-xgboost}.


\section{Approach}
\label{section-approach}

In this section, we describe our approach to efficiently calculating the influence of training points.
Since the notion of ``influence" is not rigorously defined and partly intuitive, we need to introduce a well-defined, measurable quantity that aims to capture the desired intuition; we refer to it as a \emph{proxy} for influence.
In this work, we follow the general framework of~\citet{koh-2017-understanding} and quantify influence through train set perturbations. We consider two proxies that reflect two natural variations of this approach.
First, we describe an algorithm for faster exact leave-one-out retraining of GBDT under the assumption that the model structure remains fixed, and explain how to use that framework for estimating the influence of training points on specific test samples; we then introduce a general approach to obtaining approximations to this scheme for increased computational efficiency. 
Finally, we derive an iterative algorithm to compute gradients of GBDT predictions w.r.t. the weights of training sample and analyze the resulting expressions.

\subsection{Leave-One-Out Retraining}
\label{section-approach-loo}
	For the first proxy, following~\citet{koh-2017-understanding}, we quantify the (negative) \emph{influence} of a training sample $\bx_{train}$ on a model's prediction on a test sample $F(x_{test};\bw)$ as the change of loss on $\bx_{test}$ after retraining the model without $\bx_{train}$:
	\begin{proxy}
	$\Influencegrad(\bx_{train}, \bx_{test}):=L(y_{test},\f{x_{test}})- L(y_{test},\fh{x_{test}}{x_{train}}),$
	where $\hat{F}_{\backslash x_{train}}$ is the model retrained without $x_{train}$.
	\label{proxy:loo}
	\end{proxy}
	
	Since, in order to rank the training points according to $\Influencegrad(\bx_{train}, \bx_{test})$, we would have to compute Proxy~\ref{proxy:loo} for each $\bx_{train}$,  straightforward leave-one-out retraining would be prohibitively expensive even for moderately-sized datasets. 
	Moreover, as mentioned in Section~\ref{section-introduction}, the parametric model framework of \citet{koh-2017-understanding} is not directly applicable here. 
	Thus, a solution tailored specifically for tree ensembles is required.
	
	\subsubsection{\leafrefit}
	\label{section-approach-loo-leafrefit}
		In the problem definition (Section~\ref{section-approach-setup}) we noted that training each tree requires picking its structure and leaf values. 
		Moreover, these two operations respond to small training set perturbations differently: the tree structure is piecewise constant (i.e., it either stays the same or changes abruptly), whereas leaf values change more smoothly. 
		Thus, a natural assumption to make is:
		\noindent
		\begin{assumption}
			The effect of removing a single training point can be estimated while treating each tree's structure as fixed. 
			\label{tree-structure-fixed-assumption}
		\end{assumption}
		Under Assumption~\ref{tree-structure-fixed-assumption}, it is thus sufficient to estimate how the leaf values of each tree are going to change. Since selecting optimal feature splits, e.g. via CART~\citep{quinlan-induction-1986} or C4.5~\citep{quinlan-c4-2014} algorithms, is often the computational bottleneck in fitting decision trees, this observation already yields a significant complexity reduction.
		
		Thus, our first algorithm for approximate leave-one-out retraining, \leafrefiti, is equivalent to fixing the structure of every tree and fitting leaf values without the removed point. A formal listing of the resulting algorithm is given in Algorithm~\ref{alg:leafrefit}.
		
		\begin{algorithm}[tb]
			\begin{small}
			   \caption{\leafrefiti}
			\label{alg:leafrefit}
			\begin{algorithmic}[1]
				\STATE {\bfseries Input:} training point index to remove $i_0$, sample-to-leaf assignments $\{\leafidx{t}{l}\}_{t=1,l=1}^{T, L}$, leaf formula type $\mathit{formula}$
				\STATE{\bfseries Output:} new leaf values $\{\leafvalh{t}{l}\}_{t=1, l=1}^{T, L}$
				\STATE Initialize $\delt{0}{i}\gets 0, \appr{0}{i}\gets\,0,\,\,i=1\dots~n$
				\FOR{$t=1$ \textbf{to} $T$}
					\STATE $\apprh{t-1}{i}\gets\appr{t-1}{i}+\delt{t-1}{i},\,\,i=1\dots~n$
					\FOR{$l=1$ \textbf{to} $L$}
						\STATE $\hleafidx{t}{l}\gets\leafidx{t}{l}\setminus\{i_0\}$ \label{alg:leafrefit:naive}
						\IF{$\mathit{formula}==\text{Gradient}$}
							\STATE $\leafvalh{t}{l}\gets\leafvala{G;t}{l}{\{\apprh{t-1}{i}\}_{i\in\hleafidx{t}{l}}}$ 
						\ELSE
							\STATE $\leafvalh{t}{l}\gets\leafvala{N;t}{l}{\{\apprh{t-1}{i}\}_{i\in\hleafidx{t}{l}}}$
						\ENDIF
						\STATE $\Delta\leafval{t}{l}\gets\leafvalh{t}{l}-\leafval{t}{l}$
						\STATE $\delt{t}{i}\gets\delt{t-1}{i}+\Delta\leafval{t}{l}, i\in\leafidx{t}{l}$ \label{alg:leafrefit:detail}
					\ENDFOR
				\ENDFOR
				\STATE \textbf{return} $\{\leafvalh{t}{l}\}_{t=1,l=1}^{T, L}$
			\end{algorithmic}
		\end{small}
		\end{algorithm}
	
Note that the effect of removing a training object $\bx_i$ is twofold: on each step, we have to \begin{inparaenum}[(1)] \item remove $\bx_i$ from its leaf (Algorithm~\ref{alg:leafrefit}, line~7) and \item recalculate the leaf values and record the resulting changes of intermediate predictions for each training object (line~14)\end{inparaenum}. Thus, despite improving upon straightforward retraining by not having to search for the optimal tree splits, \leafrefiti{} is still an expensive algorithm. Running it for each training sample has an asymptotic complexity of $O(Tn^2)$; moreover, in practice, for each training step $t$ it involves an expensive routine of recalculating derivatives for each training point.

	\subsubsection{\fleafrefit}
	\label{section-approach-loo-fastleafrefit}
		 We seek to limit the number of calculations at each step of \leafrefiti{}.		
		Note that, in \leafrefiti, we generally cannot make any use of caching the original first and/or second derivatives, since any $\delt{t-1}{i}$ (Algorithm~\ref{alg:leafrefit}, line~14) can be nonzero, which forces us to recompute the derivatives for each object.
		We build on the intuition that, in practice, a lot of $\delt{t-1}{i}$ may be negligible; an extreme example is when training samples can be separated in disjoint cliques, i.e., $\leafidx{t_1}{l} = \leafidx{t_2}{l}\,\,\forall t_1,t_2=1,\ldots,T$, $l=1\ldots L$. 
		In this case, removing each training point only affects its clique $I_{l_0}:=\leafidx{1}{l_0}$, since objects not sharing leaves with $i$ will not be affected: $\delt{t-1}{i}=0\,\,\forall t=1...T, i\notin I_{l_0}$. 
		Thus, at each training step $t$, we may select a subset of training samples\footnote{Methods of selecting $\updateset{t}$ will be given below.} $\updateset{t}$ whose deltas we take into account, and suppose $\apprh{t-1}{i}=\appr{t-1}{i}\,\,\forall i\notin\updateset{t}$. 
		We refer to $\updateset{t}$ as the \emph{update set}. 
		Combining this with caching the original $\appr{t-1}{i}$ and sums of derivatives in each leaf, we reduce the asymptotic complexity to $O(TnC)$, where $C=\max_t\vert\updateset{t}\vert$, which is a significant reduction if $C\ll n$. 
		A formal listing of the resulting algorithm, \fleafrefiti, is given in Algorithm~\ref{alg:fleafrefit}.
		
		\begin{algorithm}[tb]
			\begin{small}
			\caption{\fleafrefiti}
			\label{alg:fleafrefit}
		\begin{algorithmic}
			\STATE {\bfseries Input:} $i_0$, $\{\leafidx{t}{l}\}_{t=1,l=1}^{T, L}$, $\{\der{t}{i}\}_{t=1,i=1}^{T,n}$, $\{\hess{t}{i}\}_{t=1,i=1}^{T,n}$, $\{\Der{t}{l}\}_{t=1,l=1}^{T,L}$, $\{\Hess{t}{l}\}_{t=1,l=1}^{T,L}$, leaf formula type $\mathit{formula}$
			\STATE \textbf{Output:} New leaf values $\{\leafvalh{t}{l}\}_{t=1,l=1}^{T,L}$
			\STATE Initialize $\delt{0}{i}\gets 0, i=1...n$
			\FOR{$t=1$ \textbf{to} $T$}
				\STATE $\updateset{t}\gets\text{UpdateSet(t)}$
				\FOR{$l=1$ \textbf{to} $L$}
					\STATE $\updateset{t}_l\gets\updateset{t}\cap\leafidx{t}{l}$
					\STATE $\leafvalh{t}{l}\gets$ \textit{LeafRecalc}$(t, l, \{\leafidx{t}{l}\}_{t=1,l=1}^{T, L}, \{\der{t}{i}\}_{t=1,i=1}^{T,n},$\\ $\{\hess{t}{i}\}_{t=1,i=1}^{T,n}, \Der{t}{l}, \Hess{t}{l}, \updateset{t}_l, \mathit{formula})$
					\STATE $\Delta\leafval{t}{l}\gets\leafvalh{t}{l}-\leafval{t}{l}$
					\STATE $\delt{t}{i}\gets\delt{t-1}{i}+\Delta\leafval{t}{l}, i\in\leafidx{t}{l}$
				\ENDFOR
			\ENDFOR
			\STATE \textbf{return} $\{\leafvalh{t}{l}\}_{t=1,l=1}^{T,L}$
		\end{algorithmic}
		\end{small}
		\end{algorithm}
		
		\begin{algorithm}[tb]
			\caption{\textit{LeafRecalc}}
			\label{alg:leafrecalc}
		\begin{algorithmic}
			\STATE \textbf{Input:} boosting step $t$, leaf index $l$, $\{\leafidx{t}{l}\}_{t=1,l=1}^{T, L}$, $\{\der{t}{i}\}_{t=1,i=1}^{T,n}$, $\{\hess{t}{i}\}_{t=1,i=1}^{T,n}$, $\Der{t}{l}$, $\Hess{t}{l}$, $\updateset{t}_l$, leaf formula type $formula$
			\STATE \textbf{Output:} New leaf value $\leafvalh{t}{l}$
			\STATE $I\gets I[i_0\in\leafidx{t}{l}]$
			\STATE $\Delta\,g^t_j\gets g^t_j(\appr{t-1}{j}+\delt{t-1}{j})-g^t_j(\appr{t-1}{j}), j\in\updateset{t}_l$
			\STATE $\hat{G}^t_l\gets\Der{t}{l}+\sum_{j\in\updateset{t}_l}w_j\Delta\,g^t_j-Iw_{i_0}\der{t}{i_0}$
			\IF{$formula==\text{Gradient}$}
				\STATE $\hat{H}^t_l\gets\sum_{j\in\leafidx{t}{l}\setminus\{i_0\}}w_j$
			\ELSE
				\STATE 	$\Delta\,h^t_j\gets h^t_j(\appr{t-1}{j}+\delt{t-1}{j})-h^t_j(\appr{t-1}{j}), j\in\updateset{t}_l$
				\STATE $\hat{H}^t_l\gets\Hess{t}{l}+\sum_{j\in\updateset{t}_l}w_j\Delta\,h^t_j-Iw_{i_0}\hess{t}{i_0}$
			\ENDIF
			\STATE \textbf{return} $-\frac{\hat{G}^t_l}{\hat{H}^t_l}$
		\end{algorithmic}
		\end{algorithm}
	
	\subsubsection{Selecting the update set}
	\label{section-approach-loo-updateset}
		In Section~\ref{section-approach-loo-fastleafrefit}, we introduced \fleafrefiti, an approximate algorithm
		potentially achieving lower complexity than \leafrefiti. 
		Its definition, however, allowed for an arbitrary choice of the \emph{update set} $\updateset{t}$ telling us which training points' prediction changes to take into account at boosting step $t$. 
		It is intuitively clear that different strategies of selecting $\updateset{t}$ allow us to optimize the trade-off between computational complexity and quality of approximating leave-one-out retraining; thus, \fleafrefiti~provides a principled way of obtaining approximations of different rigor to \leafrefiti.		
		Natural strategies for selecting the update set include:
		
		\noindent
		\textbf{SinglePoint}: don't update any points' predictions and only ignore the derivatives of  $i$ (the index of the training point to be removed) in each leaf, i.e., $\updateset{t}=\emptyset$. 
		Also note that this strategy is equivalent to disregarding dependencies between consecutive trees in GBDT and treating the ensemble like a Random Forest. 
		Its complexity is $O(Tn)$.
		
		\noindent
		\textbf{AllPoints}: make no approximations and update each point at each step, i.e., $\updateset{t}=\{1,\ldots,\vert\bX_{train}\vert\}$. This reduces \fleafrefiti~to~\leafrefiti.
		
		\noindent
		\textbf{TopKLeaves(k)}: this heuristic builds on the observation that, at each step $t$, each $\delt{t}{j}, j\in\leafidx{t}{l}$ increases over $\delt{t-1}{j}$ by the same amount $\Delta f^{t}_{l}$ across the leaf $l$ (see Algorithm~\ref{alg:fleafrefit}). 
		$\Delta f^{t}_{l}$'s magnitude, in turn, is expected to be larger for leaves where $\delt{t-1}{j}, j\in\leafidx{t}{l}$ (and, subsequently, $\Delta g^t_j$) are already large.
		Informally, the ``snowball" effect holds: the larger the change accumulated in the leaf so far, the greater its value will change. 
		Thus, to exploit this intuition, $\mathit{TopKLeaves}(k)$ only updates $\delt{t}{j}$ of training points  in $k$ leaves with the largest accumulated prediction change so far:
		\noindent
			\begin{equation}
			\begin{gathered}
				\updateset{t}=\{i\in\leafidx{t}{l}\mid l\in\{L^t_j\}_{j=1}^k\},\\
				L^t=\text{argsort}\bigl[-\sum\nolimits_{i\in\leafidx{t}{l}}\vert\delt{t-1}{i}\vert, l=1\dots L\bigr]
			\end{gathered}
			\label{eqn:topkleaves-definition}
			\end{equation}
		\textbf{Note}: despite the speedup from omitting unimportant leaves, this strategy is formally still $O(Tn^2)$ due to the fact that computing $\updateset{t}$ according to Eq.~\ref{eqn:topkleaves-definition} takes $O(n)$.
		In practice, overhead for computing Eq.~\ref{eqn:topkleaves-definition} may be negligible because, firstly, sums of $\delt{t-1}{i}$ can be quickly computed in a parallel or vectorized fashion and, secondly, because the complexity of addition is negligible compared to, e.g., calculating derivatives. 
		However, if this still poses a problem, a natural way of getting around it is sampling $m$ training points uniformly from $\bX_{train}$ and using a sample estimator of Eq.~\ref{eqn:topkleaves-definition}. The complexity of~\fleafrefiti~thus becomes $O(Tn[C+m])$, which is useful if $m\ll n$.
	
	\subsection{Prediction gradients}
	\label{section-approach-gradients}
		In the previous sections we introduced \leafrefiti~and~\fleafrefiti, fast methods of estimating the effect of a training sample on the GBDT ensemble, which can then be used to rank training points, e.g., by their influence on a test point of interest. 
		Under Assumption~\ref{tree-structure-fixed-assumption}, these methods are valid approximations of leave-one-out retraining, which gives them theoretical grounding. 
		However, as shown in Section~\ref{section-evaluation-proxy}, when Assumption~\ref{tree-structure-fixed-assumption} is violated, \leafrefiti~and~\fleafrefiti~are no longer valid approximations to Proxy~\ref{proxy:loo}.
	
		The intuition underlying Assumption~\ref{tree-structure-fixed-assumption}, however, still holds: for a small enough perturbation to the training data, the structure will remain fixed, whereas leaf values will still be changing smoothly. 
		Note that retraining the model without a sample $i$ is equivalent to setting $w^{new}_{i}=w^{old}_{i}+\Delta w_{i}; \Delta w_{i}=-w^{old}_{i}$. 
		This change may be large enough to trigger structural shifts in the ensemble; thus, we need a tool to study a model's response to smaller (arbitrarily small) perturbations.
	
		An obvious choice for such a tool is the derivative of a model's prediction w.r.t.\ a sample's weight, which was also a crucial tool in the Influence Functions framework from classical statistics~\citep{cook-characterizations-1980}:
		\begin{proxy}
		$\Influencegrad(\bx_{train}, \bx_{test}):=\frac{\partial L(y_{test}, F(x_{test}))}{\partial w_{i(x_{train})}},$
		where $i(x_{train})$ is the index of $\bx_{train}$ in $\bX_{train}$.
		\label{proxy:gradients}
		\end{proxy}
	
	\subsubsection{LeafInfluence}
	\label{section-approach-gradients-leafinfluence}
	As mentioned above, in the setup of Proxy~\ref{proxy:gradients} the statement of Assumption~\ref{tree-structure-fixed-assumption} is now guaranteed to hold and is no longer an assumption; we may consider the tree structures to be fixed and only study perturbations of leaf values, which smoothly depend on the weights. 
	Using the chain rule
		\begin{gather}
		 \frac{\partial L(y, F(x; \bw))}{\partial w_i}=\frac{\partial L(y, z)}{\partial z}\vert_{z=F(x; \bw)}\cdot\frac{\partial F(x;\bw)}{\partial w_i},
		 \label{eqn:chain-rule}
		 \end{gather}
		 we can then derive various counterfactuals (e.g., ``how would the loss on a test point change if we upweight a training point $i$?"), similarly to \citet{koh-2017-understanding}.		
		Since we have
		\begin{gather}
			\frac{\partial F(x;\bw)}{\partial w_{i}}=\sum_{t=1}^T\frac{\partial \leafvala{t}{\objpath{x}_t}{\ba{t-1}}}{\partial w_i},
			\label{eqn:prediction-to-sum-leaf-ders}
		\end{gather}
		for applying Eq.~\ref{eqn:chain-rule} to arbitrary $x$ (for a fixed $i$) it is necessary and sufficient to calculate $\frac{\partial \leafvala{t}{l}{\ba{t-1}}}{\partial w_i},\,\,t=1\ldots T$, $l=1\ldots L$. Applying Eq.~\ref{eqn:chain-rule} can then be done by running $x$ though a new tree ensemble having $\{\frac{\partial \leafvala{t}{l}{\ba{t-1}}}{\partial w_i}\}_{t=1,l=1}^{T,L}$ as leaf values.
		
		Expressions for leaf value derivatives depend on the type of leaf formula:\footnote{In Proposition~\ref{prop:leaf-derivatives}'s statement, arguments such as $\bw$ or $\appr{t}{i}$ are dropped for brevity.}
		\begin{proposition}
	        		\label{prop:leaf-derivatives}
			Leaf value derivatives are given by:
			\begin{small}
			\begin{equation}
			\begin{aligned}
				\mbox{}\!\!\!\frac{\partial \leafval{t}{G;l}}{\partial w_i}&=-\textstyle{\frac{I^{t}_{l}(i)(\leafval{t}{G;l}+\dernoa{t}{i})+\sum_{j\in\leafidx{t}{l}}w_j\hessnoa{t}{j}\da{t-1}{i}{j}}{H^t_{G;l}}} \text{ and }\\
				\mbox{}\!\!\!\frac{\partial \leafval{t}{H;l}}{\partial w_i}&=-\textstyle{\frac{I^{t}_{l}(i)(\hessnoa{t}{i}\leafval{t}{H;l}+\dernoa{t}{i})+\sum_{j\in\leafidx{t}{l}}w_j(\thirdnoa{t}{j}\leafval{t}{H;l}+\hessnoa{t}{j})\da{t-1}{i}{j}}{H^t_{H;l}}},
			\end{aligned}
			\label{eqn:leaf-derivatives}
			\end{equation}
			\end{small}
			where $\leafidx{t}{l}(i):=I[i\in\leafidx{t}{l}]$ and  $\da{t}{i}{j}:=\frac{\partial \appr{t}{j}(\bw)}{\partial w_i}$.
		\end{proposition}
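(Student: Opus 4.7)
The plan is to differentiate each of the two leaf-value formulas by the quotient rule and then expand the numerator and denominator derivatives by the chain rule, bookkeeping carefully on \emph{both} sources of $\bw$-dependence: the \emph{explicit} dependence of the weighted sums $\Der{t}{l}=\sum_{j\in\leafidx{t}{l}}w_j\dernoa{t}{j}$, $\Hess{t}{G;l}=\sum_{j\in\leafidx{t}{l}}w_j$, $\Hess{t}{H;l}=\sum_{j\in\leafidx{t}{l}}w_j\hessnoa{t}{j}$ on $w_i$ (which contributes only when $i\in\leafidx{t}{l}$, i.e.\ an $\leafidx{t}{l}(i)$ factor), and the \emph{implicit} dependence of each $\dernoa{t}{j}=g^t_j(\appr{t-1}{j}(\bw))$, $\hessnoa{t}{j}=h^t_j(\appr{t-1}{j}(\bw))$ on $\bw$ through the intermediate predictions, which by definition introduces a factor of $\da{t-1}{i}{j}:=\partial\appr{t-1}{j}/\partial w_i$. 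The only analytical facts needed are that, for any fixed $j$, $\partial g^t_j(z)/\partial z = h^t_j(z)$ and $\partial h^t_j(z)/\partial z = k^t_j(z)$ by definition of higher-order loss derivatives.

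For the gradient formula $\leafval{t}{G;l}=-\Der{t}{l}/\Hess{t}{G;l}$, the first step is to write $\partial \leafval{t}{G;l}/\partial w_i = -(\Hess{t}{G;l})^{-1}\,\partial\Der{t}{l}/\partial w_i + \Der{t}{l}(\Hess{t}{G;l})^{-2}\,\partial\Hess{t}{G;l}/\partial w_i$. The denominator is $w$-linear, so $\partial\Hess{t}{G;l}/\partial w_i=\leafidx{t}{l}(i)$. For the numerator I split the $w_j$-product derivative into the $\leafidx{t}{l}(i)\dernoa{t}{i}$ term from differentiating the explicit weight and a sum $\sum_{j\in\leafidx{t}{l}}w_j\hessnoa{t}{j}\da{t-1}{i}{j}$ from the chain-rule contribution through $\appr{t-1}{j}$. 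Substituting $\Der{t}{l}/\Hess{t}{G;l}=-\leafval{t}{G;l}$ in the second term of the quotient-rule expansion collapses the $\leafidx{t}{l}(i)$ contributions into the combined factor $\leafidx{t}{l}(i)(\leafval{t}{G;l}+\dernoa{t}{i})$, which is exactly the first claim.

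The Newton case $\leafval{t}{H;l}=-\Der{t}{l}/\Hess{t}{H;l}$ proceeds along the same lines, with one additional subtlety: the denominator is now $w$-dependent through both the explicit weighting \emph{and} implicitly through each $\hessnoa{t}{j}=h^t_j(\appr{t-1}{j}(\bw))$, so $\partial\Hess{t}{H;l}/\partial w_i = \leafidx{t}{l}(i)\hessnoa{t}{i} + \sum_{j\in\leafidx{t}{l}}w_j\thirdnoa{t}{j}\da{t-1}{i}{j}$, which is where the third derivative $\thirdnoa{t}{j}$ enters. Plugging this and the analogous expansion of $\partial\Der{t}{l}/\partial w_i$ into the quotient rule, and again using $\Der{t}{l}/\Hess{t}{H;l}=-\leafval{t}{H;l}$ to absorb the denominator-derivative contributions, groups the $\leafidx{t}{l}(i)$ terms as $\leafidx{t}{l}(i)(\hessnoa{t}{i}\leafval{t}{H;l}+\dernoa{t}{i})$ and the chain-rule terms as $\sum_{j\in\leafidx{t}{l}}w_j(\thirdnoa{t}{j}\leafval{t}{H;l}+\hessnoa{t}{j})\da{t-1}{i}{j}$, yielding the second claim.

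I expect no genuine obstacle; the computation is a careful bookkeeping exercise, and the only place something nontrivial happens is the Newton denominator, where one must not forget the $\thirdnoa{t}{j}\da{t-1}{i}{j}$ contribution. Note that the Jacobian entries $\da{t-1}{i}{j}$ are themselves sums $\sum_{\tau<t}\partial\leafval{\tau}{\objpath{x_j}_\tau}/\partial w_i$ by Eq.~\ref{eqn:prediction-to-sum-leaf-ders}, so Proposition~\ref{prop:leaf-derivatives} is implicitly a recurrence on $t$ and the entries $\da{t-1}{i}{j}$ must be computed from previous steps; this recursive structure is what the iterative \leafgradi{} algorithm exploits, but it is not needed for the proof itself.
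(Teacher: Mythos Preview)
Your proposal is correct and follows essentially the same route as the paper's proof: apply the quotient rule to $-\Der{t}{l}/\Hess{t}{*;l}$, expand $\partial\Der{t}{l}/\partial w_i$ and $\partial\Hess{t}{*;l}/\partial w_i$ by separating the explicit $w_j$-dependence (yielding the $\leafidx{t}{l}(i)$ terms) from the implicit dependence through $\appr{t-1}{j}(\bw)$ (yielding the $\da{t-1}{i}{j}$ terms via the chain rule), and then substitute $\Der{t}{l}/\Hess{t}{*;l}=-\leafval{t}{*;l}$ to regroup. Your flagging of the Newton denominator as the one place requiring the extra $\thirdnoa{t}{j}\da{t-1}{i}{j}$ contribution matches the paper exactly.
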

		\begin{proof}
			First, let us derive the desired expression\footnote{Throughout this proof, we add $\bw$ as an extra argument to the functions we study in order to highlight the dependency.} for $\leafvalawg{t}{l}$:
			\begin{equation}
			\begin{aligned}
			&\frac{\partial \leafvalawg{t}{l}}{\partial w_i}=-\frac{\partial}{\partial w_i}\left[\frac{\Derw{t}{l}}{\Hessw{t}{G;l}}\right]=\\
			&=-\frac{\frac{\partial \Derw{t}{l}}{\partial w_i}\Hessw{t}{G;l}}{\Hessw{t}{G;l}^2}+\\
			&+\frac{\frac{\partial \Hessw{t}{G;l}}{\partial w_i}\Derw{t}{l}}{\Hessw{t}{G;l}^2}
			\end{aligned}
			\label{eqn:decomposition}
			\end{equation}
			Let us calculate the derivatives of $\Derw{t}{l}$ and $\Hessw{t}{G;l}$ separately:
			\begin{equation*}
			\begin{aligned}
			&\frac{\partial \Derw{t}{l}}{\partial w_i} =\\ &=\sum_{j\in\leafidx{t}{l}}\left[\delta_{ij}\derw{t}{j}+w_j\hessw{t}{j}\da{t-1}{i}{j}\right]=\\
			&=\leafidx{t}{l}(i)\derw{t}{i}+\sum_{j\in\leafidx{t}{l}}w_j\hessw{t}{j}\da{t-1}{i}{j};\\
			&\frac{\partial \Hessw{t}{G;l}}{\partial w_i}=\leafidx{t}{l}(i)
			\end{aligned}
			\end{equation*}
			Plugging this back into Equations~\ref{eqn:decomposition} and grouping terms with and without $\leafidx{t}{l}(i)$ separately, we get:
			\begin{equation*}
			\begin{aligned}
			&\frac{\partial \leafvalawg{t}{l}}{\partial w_i}=-\leafidx{t}{l}(i)\frac{\leafval{t}{G;l}+\derw{t}{i}}{\Hessw{t}{G;l}}-\\
			&-\frac{\sum_{j\in\leafidx{t}{l}}w_j\hessw{t}{j}\da{t-1}{i}{j}}{\Hessw{t}{G;l}},
			\end{aligned}
			\end{equation*}
			which proves the first part of Proposition~\ref{prop:leaf-derivatives}.
			
			For the second part, all we have to change is to substitute $\Hessw{t}{H;l}$ for $\Hessw{t}{G;l}$. Its derivative is given by
			\begin{equation*}
			\begin{aligned}
			&\frac{\partial \Hessw{t}{G;l}}{\partial w_i}=\\
			&=\sum_{j\in\leafidx{t}{l}}\left[\delta_{ij}\hessw{t}{j}+w_j\thirdw{t}{j}\da{t-1}{i}{j}\right]=\\
			&=\leafidx{t}{l}(i)\hessw{t}{i}+\sum_{j\in\leafidx{t}{l}}w_j\thirdw{t}{j}\da{t-1}{i}{j}
			\end{aligned}
			\end{equation*}
			Just like before, plugging it back into Equations~\ref{eqn:decomposition} and grouping terms containing and not containing $\leafidx{t}{l}(i)$ separately, we get:
			\begin{equation*}
			\begin{aligned}
			&\frac{\partial \leafvalawh{t}{l}}{\partial w_i}=-\leafidx{t}{l}(i)\frac{\hess{t}{i}\leafval{t}{H;l}+\der{t}{i}}{\Hessw{t}{H;l}}-\\
			&-\frac{\sum_{j\in\leafidx{t}{l}}w_j(\third{t}{j}\leafval{t}{H;l}+\hess{t}{j})\da{t-1}{i}{j}}{\Hessw{t}{H;l}}.
			\end{aligned}
			\end{equation*}
			This concludes the proof of Proposition~\ref{prop:leaf-derivatives}.
		\end{proof}
		It can be seen from Eq.~\ref{eqn:leaf-derivatives} that leaf value derivatives at step $t$ depend on the Jacobi matrix $\da{t-1}{i}{j}$. 
		These values, in turn, are connected by a recursive relationship:
		\begin{gather}
			\da{t}{i}{j}=\da{t-1}{i}{j}+\frac{\partial\leafval{t}{P(x_j)_t}}{\partial w_i}.
			\label{eqn:dat-recursive}
		\end{gather}
		Thus, we can calculate leaf value derivatives in an iterative fashion similar to \leafrefiti. 
		A formal listing of the resulting algorithm, \leafgradi, can be found in Algorithm~\ref{alg:leafgrad}.
		\begin{algorithm}[tb]
			\caption{\leafgradi}
			\label{alg:leafgrad}
			\begin{algorithmic}
				\STATE \textbf{Inputs:} training point index $i_0$, sample-to-leaf assignments $\{\leafidx{t}{l}\}_{t=1,l=1}^{T, L}$, $\{\der{t}{i}\}_{t=1,i=1}^{T,n}$, $\{\hess{t}{i}\}_{t=1,i=1}^{T,n}$, $\{k^t_i(A^{t-1}_i)\}_{t=1,i=1}^{T,n}$, leaf formula type $\mathit{formula}$
				\STATE \textbf{Outputs:} leaf value derivatives $\{\frac{\partial \leafvala{t}{l}{\ba{t-1}}}{\partial w_i}
			\}_{t=1,l=1}^{T,L}$
				\STATE $J(\ba{0})_{ij}\gets 0,\,\,i=1\dots n,\,\,j=1\dots n$
				\FOR{$t=1$ \textbf{to} $T$}
					\STATE $\frac{\partial \leafvala{t}{l}{\ba{t-1}}}{\partial w_{i_0}}\gets$/According to Eq.~\ref{eqn:leaf-derivatives}/, $l=1\dots L$
					\STATE $\da{t}{i}{j}\gets$/According to Eq.~\ref{eqn:dat-recursive}/,$i=1\dots n,j=1\dots n$
				\ENDFOR
				\STATE \textbf{return} $\{\frac{\partial \leafvala{t}{l}{\ba{t-1}}}{\partial w_i}
				\}_{t=1,l=1}^{T,L}$
			\end{algorithmic}
		\end{algorithm}
		Besides providing means for analyzing small weight perturbations, two important traits yielding complexity reductions can be seen from Eq.~\ref{eqn:leaf-derivatives}:
		
		\noindent
		 \textbf{A}. Using Eq.~\ref{eqn:leaf-derivatives}, we can write out $\nabla_w f^t_l=\left(\frac{\partial f^t_l}{\partial w_i}\right)_{i=1}^n$ in vector form; since computing each $\frac{\partial f^t_l}{\partial w_i}$ involves addition and a vector dot product, $\nabla_w\leafval{t}{l}$ can then be expressed via vector addition and matrix/vector product for easy parallelization/vectorization.
		 
		 \noindent
		 \textbf{B}. The derivatives $\{g^t_j, h^t_j, k^t_j\}_{t=1,j=1}^{T,n}$ used in Eq.~\ref{eqn:leaf-derivatives} can now be precomputed only once during GBDT training and not for each training object $i$ whose influence we want to compute. This contrasts~\leafgradi~with~\leafrefiti~and~\fleafrefiti, where these derivatives had to be recalculated for each $i$ depending on the values of $\delt{t-1}{j}$, which change for different~$i$.

	\subsubsection{\fleafgrad}
	\label{section-approach-loo-fleafgrad}
		The final step to be made is analogous to the transition from \leafrefiti~to~\fleafrefiti: \leafgradi~is, again, $O(Tn^2)$ because it has to compute matrix/vector products with the matrix $\da{t-1}{i}{j}$ for every $t$. 
		The same approximation that powers \fleafrefiti~can be made here as well: at each step, we can select an update set $\updateset{t}$ and only take into account the influences of a subset of training objects on $\ba{t-1}$. 
		This is equivalent to assuming $\da{t-1}{i}{j}=0\,\,\forall\,\,j\notin\updateset{t}$, making $\da{t-1}{i}{j}$ a sparse matrix with the number of nonzero elements in each row bounded by $C:=\max_t\vert\updateset{t}\vert$. 
		Strategies of selecting $\updateset{t}$ and the resulting asymptotics become the same as described in Section~\ref{section-approach-loo-updateset}, with the additional benefit of being able to compute the derivatives ``off-line."


\section{Experiments}
\label{section-evaluation}
	\subsection{Research Questions}
	\label{section-evaluation-rq}
		The experiments that we conduct can be broadly categorized as serving two purposes: \begin{inparaenum}[(1)] \item studying the fundamentals of our framework and \item evaluating its quality in two applied problem setups\end{inparaenum}.
		For the first part, the research questions that we seek to answer are as follows:
		
		\noindent
		\textbf{RQ1}. How well do the different methods introduced in Sections~\ref{section-approach-loo}~and~\ref{section-approach-gradients} approximate their respective influence proxies?
		
		\noindent
		\textbf{RQ2}. Do smaller update sets significantly reduce the runtimes of~\fleafrefiti~and~\fleafgradi? Does~\fleafgradi~yield a notable runtime speedup over \fleafrefiti?

		For the second part, we proceed by considering two applied scenarios:
		\begin{inparaenum}[(1)] \item \label{scenario: 1} classification in the presence of label noise, and \item \label{scenario: 2} classification with train/test domain mismatch. \end{inparaenum} Specifically, the research questions for this part are:
		
		\noindent
		\textbf{RQ3}. For Scenario~\ref{scenario: 1}, do our methods allow to detect noise in general and, more specifically, to identify training objects most harmful for specific test points?
		
		\noindent
		\textbf{RQ4}. For Scenario~\ref{scenario: 1}, how do the proxies and their respective approximations compare in terms of quality?
		
		\noindent
		\textbf{RQ5}. For Scenario~\ref{scenario: 2}, are our methods capable of detecting domain mismatch and, moreover, providing recommendations on how to fix it?
	
	\subsection{Datasets and Framework}
	\label{section-evaluation-data}
	For our experiments with GBDT, we use CatBoost\citep{catboost} an open-source implementation of GBDT by Yandex\footnote{We use the ``Plain" mode which disables CatBoost's conceptual modifications to the standard GBDT scheme.}. 
	The datasets used for evaluation are: \begin{inparaenum}[(1)] \item Adult Data Set (\textbf{Adult}, \cite{dataset-adult}), \item Amazon Employee Access Challenge dataset (\textbf{Amazon}, \cite{dataset-amazon}), \item the KDD Cup 2009 Upselling dataset (\textbf{Upselling}, \cite{dataset-upselling}) and, for the domain mismatch experiment, \item the Hospital Readmission dataset \citep{strack-2014-impact}\end{inparaenum}. Dataset statistics and corresponding CatBoost parameters can be found in the supplementary material. Since we approach the problem as a search (for influential examples) problem, the main metrics we will be using are ranking metrics - specifically, DCG and NDCG\citep{dcg} with linear gains.

	\subsection{Proxy Approximation Quality}
	\label{section-evaluation-proxy}
		Here, we evaluate how well do variations of~\fleafrefiti~and~\fleafgradi~match their respective Proxies~\ref{proxy:loo}~and~\ref{proxy:gradients}. For that, we use the Adult Data Set.
		For \leafrefiti, its validity heavily depends on whether Assumption~\ref{tree-structure-fixed-assumption} holds; thus, we split the training points into two disjoint sets based on whether they violate Assumption~\ref{tree-structure-fixed-assumption} (\emph{Changed} in Table~\ref{table:proxy}) or not. We then randomly sample $n=2000$ points from both groups to ensure that they are equal in size and, in both of them, for each test object, we rank the train points with respect to their influence on this test object. We then measure NDCG@100 with respect to the relevance labels produced by ground-truth rankings induced by the respective proxies for \leafrefiti~and~\fleafrefiti, Proxy~\ref{proxy:loo} and Proxy~\ref{proxy:gradients}. Finally, we average the results over the test objects. Results are given in Table~\ref{table:proxy}.
				
		\begin{table}[t]
			\caption{NDCG@100 of proxy ranking approximation.}
			\label{table:proxy}
			\centering
			\begin{small}
				\begin{tabulary}{\columnwidth}{Lcccc}
					\toprule
					\multirow{2}{*}{Method} & \multicolumn{2}{c}{\fleafrefiti} & \multicolumn{2}{c}{\fleafgradi} \\
					 & Same & Changed & Same & Changed\\
					 \midrule
					SinglePoint & 0.38 & 0.10 & 0.39 & 0.80\\
					Top1Leaves & 0.41 & 0.10 & 0.43 & 0.81\\
					Top2Leaves & 0.53 & 0.10 & 0.52 & 0.83\\
					Top8Leaves & 0.87 & 0.10 & 0.87 & 0.94\\
					Top22Leaves & 0.96 & 0.10 & 0.95 & 0.98\\
					Top64Leaves & 1.00 & 0.10 & 1.00 & 1.00\\
					\bottomrule
				\end{tabulary}
			\end{small}
		\end{table}
	
	Analysis of Table~\ref{table:proxy} answers our \textbf{RQ1}. Firstly, as expected, \leafrefiti~and its faster variations only approximate Proxy~\ref{proxy:loo} when Assumption~\ref{tree-structure-fixed-assumption} holds. When it does, the quality of~\fleafrefiti~uniformly increases with the update set size, reaching perfect results for \emph{Top64Leaves}, which is equivalent to~\leafrefiti. On the other hand, \leafgradi~approximates Proxy~\ref{proxy:gradients} regardless of Assumption~\ref{tree-structure-fixed-assumption}; the dependency of~\fleafgradi~on the update set is analogous to that of~\fleafrefiti. This shows that~\leafgradi~is more robust in approximating its corresponding proxy than~\leafrefiti.

	\subsection{Runtime Comparison}
	\label{section-evaluation-runtime}
	In this section, we compare different variations (update set choices) of~\fleafrefiti~and~\fleafgradi~in terms of their runtimes. For each dataset used in the study, we randomly pick $k=100$ training objects for influence evaluation, calculate the resulting change in the model (new leaf values for~\fleafrefiti~and leaf value derivatives for~\fleafgradi), measure the total elapsed wall time and divide the result by $k$ to obtain the average time elapsed per one training object. The results are given in Fig.~\ref{fig:runtimes}.
	\begin{figure}
		\includegraphics[clip, trim=0mm 3mm 0mm 0mm, width=\linewidth]{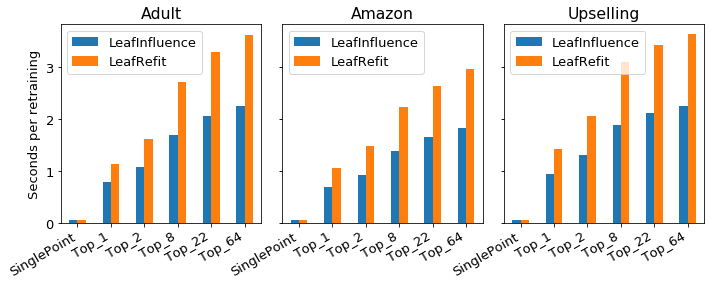}
		\caption{Wall times elapsed per training object for different variations of \fleafrefiti and \fleafgradi. \emph{Top\_k} denotes the \emph{TopKLeaves(k)} update set.}
		\label{fig:runtimes}
		\vskip -0.8cm
	\end{figure} 
	Firstly, as expected, we observe that smaller update sets considerably reduce the runtimes of our algorithms, with the most radical speedup yielded by \emph{SinglePoint} due to not having to recalculate any derivatives at all. Secondly, quite naturally, \fleafgradi~performs much faster than \fleafrefiti, presumably due to vectorization and gradient precomputation (see end of Section~\ref{section-approach-gradients-leafinfluence}). These observations confirm \textbf{RQ2}.
		
	\subsection{Harmful Object Removal}
	\label{section-evaluation-harmful}
	In this experiment, we consider a particular use-case scenario, classification in the presence of label noise, and evaluate whether our methods are able to identify training objects that are \begin{inparaenum}[(1)] \item noisy, \item harmful for specific test objects\end{inparaenum}. 
	In order to do that, we randomly select $k$ training samples,\footnote{We set $k=4000$ for Adult and Amazon, and $k=3500$ for Upselling.} flip their labels, and obtain GBDT's predictions on test data before and after noise injection. We then conduct two experiments:
	
	\begin{figure}[t]
		\centering
		\includegraphics[width=\linewidth]{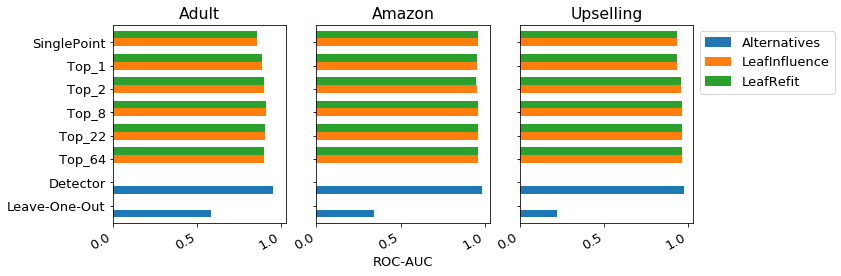}
		\caption{ROC-AUC of noise detection qualities.}
		\label{fig:noise_detection}
	\end{figure}
	
	\begin{figure}[t]
		\begin{subfigure}{\columnwidth}
			\centering
			\includegraphics[width=\linewidth]{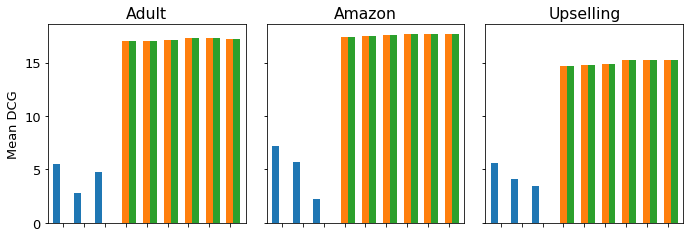}
			\caption{Logloss reduction on a particular test index.}
			\label{fig:on_idx}
		\end{subfigure}
		\begin{subfigure}{\columnwidth}
			\centering
			\includegraphics[width=\linewidth]{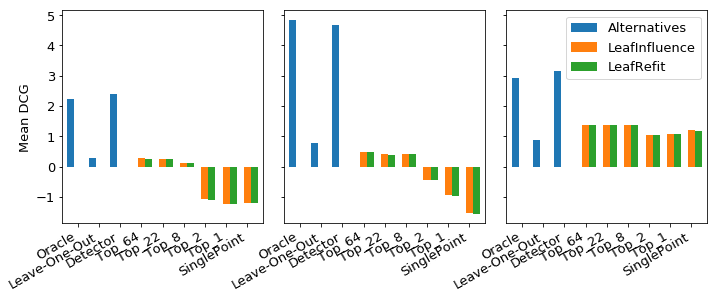}
			\caption{Logloss reduction on the whole test set.}
			\label{fig:mean_on_test}
		\end{subfigure}
		\caption{Mean DCG for relative Logloss reductions.}
		\label{fig:error_fixing}
		\vskip -0.75cm
	\end{figure}
	
	\noindent
	\textbf{A}. We sort the training points in ascending order of average influence on test objects and measure ROC-AUC of noise detection. In addition to variations of~\fleafrefiti~and~\fleafgradi, we also compare against \begin{inparaenum}[(1)] \item A noise detection method exploiting the problem structure, which scores the training points using GBDT's prediction in favor of the class opposite to its observed label (\emph{Detector}), \item actual loss changes after leave-one-out retraining (\emph{Leave-One-Out}), and \item ground-truth binary labels of the train object being noisy (\emph{Oracle})\end{inparaenum}. The results are given in Fig.~\ref{fig:noise_detection}.
	
	\noindent
	\textbf{B}. We select $n=50$ test points that suffered the largest Logloss increase, thus simulating problematic test objects. For each of these objects, we sort the training points in ascending order of influence and incrementally remove them from the training set in batches of $m=50$ objects; on each iteration we measure the relative Logloss reduction both on this given test object and on the whole test set $\bX_{test}$ and, similarly to ranking, calculate DCG using these reductions as gains. Finally, we average these metrics over the $n$ test points. The results are given in Fig.~\ref{fig:on_idx}~and~\ref{fig:mean_on_test}.
	
	Firstly, from Fig.~\ref{fig:noise_detection}, we note that all variations of~\fleafrefiti~and~\fleafgradi~perform strongly on the overall noise detection problem, where they score close to the top-performing \emph{Detector}.
	Secondly, our methods greatly outperform their competitors (shown in blue on Fig.~\ref{fig:on_idx}) in targeting training objects harmful for a particular test object. These two observations confirm the hypothesis of \textbf{RQ3}.
	Finally, the two parts on Fig.~\ref{fig:error_fixing} address \textbf{RQ4} by clearly showing the way in which larger update sets increase quality: while all approximations score comparably in targeting particular test objects, smaller update sets lead to worsening the overall test quality (except for Upselling); in other words, \emph{smaller update sets lead to overfitting the targeted test object}. Proper configurations of TopKLeaves, on the other hand, allow to ``fix" a specific test object without overfitting it (k=8, 22, 64 for Adult and Amazon).

	\subsection{Debugging Domain Mismatch}
	\label{section-evaluation-readmission}
	A common issue in the supervised machine learning is \emph{domain mismatch}. This is a situation, when the joint distribution of points in the test dataset $\bX_{test}$ differs from the one in the labeled training dataset $\bX_{train}$. Often in such scenarios, a model fine-tuned on the training dataset fails to produce accurate predictions on the test data. A standard way to cope with this problem is re-weighting $\bX_{train}$.

	In the following experiment we demonstrate that by identifying influential samples in $\bX_{train}$ for certain subsamples in $\bX_{test}$ we are able to detect domain mismatch and get a hint on how the distribution of points in $\bX_{train}$ should be modified in order to match better the distribution of points in $\bX_{test}$. The design of this experiment is a modification of the corresponding use-case of~\citet{koh-2017-understanding}. We use the same Hospital dataset (see Section~\ref{section-evaluation-data}), with each point being a hospital patient represented by 127 features and the goal is to predict the readmission. To introduce domain mismatch we bias the distribution in the training dataset by filtering out a subsample of patients with $\mathrm{age}\in[40;50)$ and label $y=1$. Originally we had 169/1853 readmitted patients in this group and 2140/20000 overall; after we get 17/1601 in the $\mathrm{age}\in[40;50)$ group and 1988/19848 overall.
	Clearly, the distribution of labels in this specific age group becomes highly biased, while the proportion of positive labels in the whole dataset changes slightly (from 10.7\% to 10.0\%).
	
	Training set $\bX_{train}$ is naturally split into four parts $\{\bX_{train}^i\}_{i=1}^4$ depending on the value of $y$ and whether $\mathrm{age}\in[40;50)$.
	One would expect that in the modified training dataset, samples with  $\mathrm{age}\in[40;50)$ and $y=1$ are the most (positively) influential, so their removal will be the most harmful for the performance on the test dataset, while the removal of the samples with $\mathrm{age}\in[40;50)$ and $y=0$ might even be beneficial, since it is the most straightforward way to align the distributions in the test and train datasets.
	Below we confirm this expectation.
	
	Let us focus on the subset $\bX_{test}^0:=\{\bx\in\bX_{test}\,|\,\mathrm{age}(\bx)\in[40;50)\}$,
	since its elements are expected to be the most affected by the introduced domain mismatch. We sample 100 points from every part $\{\bX_{train}^i\}_{i=1}^4$ (or take the whole part, if it has $<100$ points). For each of the methods \emph{FastLeafRefit} and \emph{FastLeafInfluence} with various update sets we compute the influence of the training samples on $\bX_{test}^0$. Specifically, (a) with \emph{FastLeafRefit}, for an element $\bx\in\bX_{train}$ we find the average Logloss reduction on $\bX_{test}^0$, introduced by removing $\bx$; (b) with \emph{FastLeafInfluence}, for an element $\bx\in\bX_{train}$ we find the derivative of the average Logloss on $\bX_{test}^0$ with respect to the weight $w$ of $\bx$ at $w=1$.
	
	Table~\ref{table:mismatch_influence1} provides the average influence among the sampled train points with the fixed label $y\in\{0,1\}$ and the fixed indicator $I(\mathrm{age}\in[40;50))\in\{0,1\}$.
	\begin{table}[t]
		\caption{Influence of the points in $\bX_{train}$ on the loss on $\bX_{test}^0$ averaged in the corresponding sampled group (LR=\emph{LeafRefit}, LI=\emph{LeafInfluence}).}	
		\label{table:mismatch_influence1}
		\centering
		\begin{small}
			\begin{tabulary}{\linewidth}{Lcccc}
				\toprule
			& \multicolumn{2}{c}{$\mathrm{age\in[40;50)}$}
			& \multicolumn{2}{c}{$\mathrm{age\not\in[40;50)}$}\\
			 Method & $y=1$ & $y=0$ & $y=1$ & $y=0$\\
				\midrule
				LR SinglePoint & {\bf -0.525} & 0.151 & 0.084 & 0.141 \\
				LR Top1Leaves & {\bf -0.515} & 0.150 & 0.093 & 0.140 \\
				LR Top2Leaves & {\bf -0.489} & 0.150 & 0.103 & 0.139 \\
				LR Top8Leaves & {\bf -0.397} & 0.147 & 0.120 & 0.137 \\
				LR Top22Leaves & {\bf -0.385} & 0.146 & 0.124 & 0.137 \\
				LR Top64Leaves & {\bf -0.384} & 0.146 & 0.124 & 0.137 \\
				\midrule
				LI SinglePoint & {\bf -0.652} & 0.015 & -0.052 & 0.005 \\
				LI Top1Leaves & {\bf -0.642} & 0.014 & -0.043 & 0.004 \\
				LI Top2Leaves & {\bf -0.616} & 0.014 & -0.033 & 0.003 \\
				LI Top8Leaves & {\bf -0.524} & 0.011 & -0.015 & 0.001 \\
				LI Top22Leaves & {\bf -0.512} & 0.011 & -0.012 & 0.001 \\
				LI Top64Leaves & {\bf -0.511} & 0.010 & -0.011 & 0.001 \\
				\bottomrule
			\end{tabulary}
		\end{small}
		\vskip -0.5cm
	\end{table}
	As expected, with all methods, the samples of the same type, as the filtered samples, are consistently the most influential. Indeed, removal of these samples increases the most the loss on $\bX_{test}$, and the derivative of the loss with respect to the weights of these samples is negative indicating that \emph{FastLeafInfluence} favors upweighting them.
	In all cases removal of elements with $y=0$ and $\mathrm{age}\in[40;50)$ is estimated to be profitable, also confirming the initial expectations.
	These results allow us to answer \textbf{RQ5} in the positive.
	


\section{Conclusion}
In this work, we addressed the problem of finding train objects that exerted the largest influence on the GBDT's prediction on a particular test object. Building on the Influence Function framework for parametric models, we derived~\leafrefiti~and~\leafgradi, methods for estimating influences based on their respective proxy metrics, Proxies~\ref{proxy:loo}~and~\ref{proxy:gradients}. By utilizing the structure of tree ensembles, we also derived computationally efficient approximations to these methods, \fleafrefiti~and~\fleafgradi. In our experiments, through considering several applied scenarios, we showed the practical applicability of these approaches, as well as their ability to produce actionable insights allowing to improve the existing model.
	
	\clearpage
	\section*{Acknowledgments}
	We would like to thank Anna Veronika Dorogush for valuable commentary and discussions, as well as technical assistance with the CatBoost library.
	\bibliographystyle{icml2018}
	\bibliography{bibliography}
	

	\setcounter{equation}{0}
	\setcounter{figure}{0}
	\setcounter{table}{0}
	\setcounter{page}{1}
	\makeatletter
	\renewcommand{\theequation}{S\arabic{equation}}
	\renewcommand{\thefigure}{S\arabic{figure}}
	\renewcommand{\bibnumfmt}[1]{[S#1]}
	\renewcommand{\citenumfont}[1]{S#1}
	
	\clearpage
	\appendix
	\section*{Supplementary 1: Dataset Statistics and CatBoost Parameters}
	In this supplementary material, we specify the main statistics of the evaluation datasets (Table~\ref{table:datasets}) and values of CatBoost parameters used in the experiments (Table~\ref{table:parameters}):
	\begin{table}[h!]
		\caption{Dataset statistics.}
		\label{table:datasets}
		\centering
		\begin{small}
			\begin{tabulary}{\linewidth}{Lrrr}
				\toprule
				Dataset & No. Features & Train size & Test size \\
				\midrule
				Adult & 14 & 32,561 & 16,281 \\
				Amazon & 9 & 26,215 & 6,554 \\
				Upselling & 214 & 35,000 & 15,000 \\
				Hospital & 127 & 20,000 & 81,766 \\
				\bottomrule
			\end{tabulary}
		\end{small}
	\vskip -0.7cm
	\end{table}
	\noindent
	\begin{table}[h!]
		\caption{Dataset CatBoost parameters.}
		\label{table:parameters}
		\centering
		\begin{small}
			\begin{tabulary}{\linewidth}{LcccL}
				\toprule
				Dataset & No. Trees & Depth & Learn rate & Formula \\
				\midrule
				Adult & 100 & 6 & 0.2\phantom{0} & Newton \\
				Amazon & 100 & 6 & 0.15 & Newton \\
				Upselling & 100 & 6 & 0.15 & Newton \\
				\bottomrule
			\end{tabulary}
		\end{small}
	\end{table}
	
\end{document}